\newcolumntype{+}{!{\vrule width 2pt}}
\newlength\savedwidth
\renewcommand{\@biblabel}[1]{\quad#1.}
\theoremstyle{plain}
\newtheorem{theorem}{Theorem} 
\newtheorem{proposition}[theorem]{Proposition}
\theoremstyle{definition}
\newtheorem{example}[theorem]{Example}
\newcommand{\F}{\textup{F}}
\newcommand\Rb{\mathbb{R}}
\newcommand{\zerobf}[0]{\bm{0}}
\newcommand{\onebf}[0]{\bm{1}}
\newcommand{\Abf}[0]{\bm{A}}
\newcommand{\Bbf}[0]{\bm{B}}
\newcommand{\Cbf}[0]{\bm{C}}
\newcommand{\Ebf}[0]{\bm{E}}
\newcommand{\Hbf}[0]{\bm{H}}
\newcommand{\Ibf}[0]{\bm{I}}
\newcommand{\Mbf}[0]{\bm{M}}
\newcommand{\Pbf}[0]{\bm{P}}
\newcommand{\Qbf}[0]{\bm{Q}}
\newcommand{\Ubf}[0]{\bm{U}}
\newcommand{\Vbf}[0]{\bm{V}}
\newcommand{\Wbf}[0]{\bm{W}}
\newcommand{\Xbf}[0]{\bm{X}}
\newcommand{\abf}[0]{\bm{a}}
\newcommand{\ebf}[0]{\bm{e}}
\newcommand{\kbf}[0]{\bm{k}}
\newcommand{\tbf}[0]{\bm{t}}
\newcommand{\ubf}[0]{\bm{u}}
\newcommand{\vbf}[0]{\bm{v}}
\newcommand{\wbf}[0]{\bm{w}}
\newcommand{\xbf}[0]{\bm{x}}
\newcommand{\ybf}[0]{\bm{y}}
\newcommand{\Bernoulli}{\operatorname{Bernoulli}}
\newcommand{\OBJ}{\operatorname{OBJ}}
\newcommand{\GF}{\operatorname{GF}}
\newcommand{\diag}{\operatorname{diag}}
\newcommand{\rank}[0]{\operatorname{rank}}
\renewcommand{\vec}[0]{\operatorname{vec}}
\newcommand{\range}[0]{\operatorname{range}}
\DeclareMathOperator*{\argmin}{arg\,min}
\newcommand{\defeq}{\stackrel{\text{\tiny \textnormal{def}}}{=}}  
\begin{document}
\vspace*{0.2in}

\begin{flushleft}
{\Large
\textbf\newline{Binary matrix factorization on special purpose hardware}
}
\newline
\\
Osman Asif Malik\textsuperscript{1\dag*},
Hayato Ushijima-Mwesigwa\textsuperscript{2},
Arnab Roy\textsuperscript{2},
Avradip Mandal\textsuperscript{2},
Indradeep Ghosh\textsuperscript{2}
\\
\bigskip
\textbf{1} Department of Applied Mathematics, University of Colorado Boulder, Boulder, CO, USA
\\
\textbf{2} Fujitsu Research of America, Inc., Sunnyvale, CA, USA
\\
\bigskip

\dag Work done while at Fujitsu Research of America, Inc.\

* Corresponding author: osman.malik@colorado.edu

\bigskip

{\footnotesize
This version of the article has been published and is subject to the \href{https://creativecommons.org/licenses/by/4.0/}{Creative Commons Attribution License (CC BY 4.0)}. 
The version of record is: 
Malik OA, Ushijima-Mwesigwa H, Roy A, Mandal A, Ghosh I (2021) 
Binary matrix factorization on special purpose hardware. 
PLoS ONE 16(12): e0261250. 
\url{https://doi.org/10.1371/journal.pone.0261250}
}

\end{flushleft}
\section*{Abstract}
Many fundamental problems in data mining can be reduced to one or more NP-hard combinatorial optimization problems. 
Recent advances in novel technologies such as quantum and quantum-inspired hardware promise a substantial speedup for solving these problems compared to when using general purpose computers but often require the problem to be modeled in a special form, such as an Ising or quadratic unconstrained binary optimization (QUBO) model, in order to take advantage of these devices. 
In this work, we focus on the important	binary matrix factorization (BMF) problem which has many applications in data mining. 
We propose two QUBO formulations for BMF.
We show how clustering constraints can easily be incorporated into these formulations. 
The special purpose hardware we consider is limited in the number of variables it can handle which presents a challenge when factorizing large matrices.
We propose a sampling based approach to overcome this challenge, allowing us to factorize large rectangular matrices.
In addition to these methods, we also propose a simple baseline algorithm which outperforms our more sophisticated methods in a few situations. 
We run experiments on the Fujitsu Digital Annealer, a quantum-inspired complementary metal-oxide-semiconductor (CMOS) annealer, on both synthetic and real data, including gene expression data. 
These experiments show that our approach is able to produce more accurate BMFs than competing methods.

\section*{Introduction} \label{sec:intro}

Many fundamental problems in data mining consist of discrete decision making and are combinatorial in nature. Examples include feature selection, data categorization, class assignment, identification of outlier instances, $k$-means clustering, combinatorial extensions of support vector machines, and consistent biclustering, to mention a few \cite{pardalos2013handbook}.
In many cases, these underlying problems are NP-hard, and approaches to solving them therefore dependent on heuristics. 
Recently, researchers have been exploring different computing paradigms to tackle these NP-hard problems, including quantum computing and the development of dedicated special purpose hardware. 
The Ising and quadratic unconstrained binary optimization (QUBO) models are now becoming unifying frameworks for the development of these novel types of hardware for combinatorial optimization problems. 

Binary matrix factorization is an NP-hard combinatorial problem that many computational tasks originating from a wide range of applications can be reformulated into. These applications include areas such as data clustering \cite{zhang2007, zhang2010, miettinen2011a, miettinen2008, liang2020},
pattern discovery \cite{shen2009, lucchese2010}, dictionary learning \cite{ramirez2018}, collaborative filtering \cite{ravanbakhsh2016}, association rule mining \cite{koyuturk2003}, dimensionality reduction \cite{bartl2012}, and image rendering \cite{kumar2019}. As such, any advances in solving the binary matrix factorization problem, can potentially lead to breakthroughs in various application domains.  

In this paper we show how the aforementioned hardware technologies, via the QUBO framework, can be used for binary matrix factorization. 
As Moore's law comes to an end \cite{waldrop2016ChipsArea}, investigating how such post-Moore's law technologies can be used is an important task.
This is especially true for primitives like binary matrix factorization which are used in data mining tasks that continue to grow ever larger and more complex.
We make the following contributions in this paper:
\begin{itemize}
	\item Provide two QUBO formulations for one variant of the binary matrix factorization problem. 
	To the best of our knowledge, these are the first methods specifically designed for solving binary matrix factorization on quantum and quantum-inspired hardware to appear in the literature.
	We additionally propose a simple baseline method which outperforms our more sophisticated methods in a few situations.
	
	\item Show how constraints that are useful in clustering tasks can easily be incorporated into the QUBO formulations.
	
	\item Present a sampling heuristic for factorizing large rectangular matrices.	
	
	\item Conduct experiments on both synthetic and real data on the Fujitsu Digital Annealer. 
	These experiments suggest that our method is able to achieve higher accuracy than competing methods in the kind of binary matrix factorization we consider.
\end{itemize}

\subsection*{Binary matrix factorization}

Let $\Abf \in \{0, 1\}^{m \times n}$ be a matrix with binary entries.
For a positive integer $r \leq \min(m, n)$, the rank-$r$ binary matrix factorization (BMF) problem is
\begin{equation} \label{eq:bmf}
\min_{\Ubf, \Vbf} \; \| \Abf - \Ubf \Vbf^\top \|_\F^2 \;\;\;\; \text{s.t.} \;\; \Ubf \in \{0,1\}^{m \times r}, \; \Vbf \in \{0,1\}^{n \times r}.
\end{equation}
We discuss other definitions of BMF that appear in the literature in the section \nameref{sec:related-work}.

\begin{example}[Exact BMF] \label{ex:exact-bmf}
	Define matrices
	\begin{equation}
		\Abf \defeq 
		\begin{bmatrix}
			1 & 1 & 0 \\
			1 & 1 & 1 \\
			0 & 0 & 1
		\end{bmatrix},
		\;\;\;\;
		\Ubf \defeq
		\begin{bmatrix}
			0 & 1 \\
			1 & 1 \\
			1 & 0 \\
		\end{bmatrix},
		\;\;\;\;
		\Vbf \defeq
		\begin{bmatrix}
			0 & 1 \\
			0 & 1 \\
			1 & 0 \\
		\end{bmatrix}.
	\end{equation}
	Then $\Abf = \Ubf \Vbf^\top$ is an exact BMF of $\Abf$.
\end{example}

\subsection*{The QUBO framework}

Let $\Qbf \in \Rb^{n \times n}$ be a matrix. A QUBO problem takes the form 
\begin{equation} \label{eq:qubo}
	\min_{\xbf} \; \xbf^\top \Qbf \xbf \;\;\;\; \text{s.t.} \;\; \xbf \in \{0,1\}^n.
\end{equation}
The tutorial by Glover et al.\ \cite{glover2018} is a good introduction to this problem which also discusses some of its many applications.

\subsection*{The Digital Annealer} \label{sec:da}

The Fujitsu Digital Annealer (DA) is a hardware accelerator for solving fully connected QUBO problems.
Internally the hardware runs a modified version of the Metropolis--Hastings algorithm \cite{hastings1970monte,metropolis1953equation} for simulated annealing. 
The hardware utilizes massive parallelization and a novel sampling technique. 
The novel sampling technique speeds up the traditional Markov Chain Monte Carlo (MCMC) method by almost always moving to a new state instead of being stuck in a local minimum. 
As explained in \cite{aramon2019physics}, in the DA, each Monte Carlo step takes the same amount of time, regardless of accepting a flip or not. 
In addition, when accepting the flip, the computational complexity of updating the effective fields is constant regardless of the connectivity of the graph. 
The DA also supports Parallel Tempering (replica exchange MCMC sampling) \cite{swendsen1986replica} which improves dynamic properties of the Monte Carlo method. 
In our experiments, we use the DA coupled with software techniques as our main QUBO solver. 

\subsection*{Notation}

Bold upper case letters (e.g.\ $\Abf$) denote matrices, bold lower case letters (e.g.\ $\xbf$) denote vectors, and lower case regular and Greek letters  (e.g.\ $x$, $\lambda$) denote scalars.
Subscripts are used to indicate entries in matrices and vectors.
For example, $a_{ij}$ is the entry on position $(i,j)$ in $\Abf$.
A $*$ in a subscript is used to denote all entries along a dimension. 
For example, $\abf_{i*}$ and $\abf_{*j}$ are the $i$th row and $j$th column of $\Abf$, respectively.
We use $\onebf$, $\zerobf$ and $\Ibf$ to denote a matrix of ones, a matrix of zeros, and the identity matrix, respectively. 
Subscripts are also used to indicate the size of these matrices.
For example, $\onebf_{m \times n}$ is an $m \times n$ matrix of all ones, $\onebf_n \defeq \onebf_{n \times n}$ is an $n \times n$ matrix of all ones, and $\Ibf_n$ is the $n \times n$ identity matrix.
These subscripts are omitted when the size is obvious.
Superscripts in parentheses will be used to number matrices and vector (e.g.\ $\Abf^{(1)}$, $\Abf^{(2)}$).
The matrix Kronecker product is denoted by $\otimes$.
The function $\vec(\cdot)$ takes a matrix and turns it into a vector by stacking all its columns into one long column vector.
The function $\diag(\cdot)$ takes a vector input and returns a diagonal matrix with that vector along the diagonal.
Semicolon is used as in Matlab to denote vertical concatenation of vectors. 
For example, if $\ubf \in \Rb^{m}$ and $\vbf \in \Rb^{n}$ are column vectors, then $[\ubf; \; \vbf]$ is a column vector of length $m+n$.
The Frobenius norm of a matrix $\Abf$ is defined as 
\begin{equation}
	\|\Abf\|_\F \defeq \sqrt{\sum_{ij} a_{ij}^2}.
\end{equation}
For positive integers $n$, we use the notation $[n] \defeq \{1, 2, \ldots, n\}$.

\section*{Related work} \label{sec:related-work}

The most popular methods for BMF are the two by Zhang et al.\ \cite{zhang2007, zhang2010}. 
Their first approach alternates between updating $\Ubf$ and $\Vbf$ until some convergence criteria is met.
It incorporates a penalty which encourages the entries of $\Ubf$ and $\Vbf$ to be near 0 or 1.
At the end of the algorithm, the entries of $\Ubf$ and $\Vbf$ are rounded to ensure they are exactly 0 or 1.
Their second approach initializes $\Ubf$ and $\Vbf$ using nonnegative matrix factorization. 
For each factor matrix, a threshold is then identified, and values in the matrix below and above the threshold are rounded to 0 and 1, respectively.

Koyut\"{u}rk and Grama \cite{koyuturk2003} develop a framework called PROXIMUS, which decomposes binary matrices by recursively using rank-1 approximations, which results in a hierarchical representation. 
Shen et al.\ \cite{shen2009} provide a linear program formulation for the rank-1 BMF problem and provide approximation guarantees.
Ram\'{i}rez \cite{ramirez2018} presents methods for BMF applied to binary dictionary learning. 
Kumar et al.\ \cite{kumar2019} provide faster approximation algorithms for BMF as well as a variant of BMF for which inner products are computed over the finite field of two elements ($\GF(2)$).
Diop et al.\ \cite{diop2017} propose a variant of BMF for binary matrices which takes the form $\Abf \approx \Phi(\Ubf \Vbf^\top)$, where $\Ubf$ and $\Vbf$ are binary, and $\Phi$ is a nonlinear sigmoid function.
They use a variant of the penalty approach by \cite{zhang2007, zhang2010} to compute the decomposition.

Boolean matrix decomposition, which is also referred to as binary matrix decomposition by some authors, is similar to the BMF in \eqref{eq:bmf}, but an element $(\Ubf \Vbf^\top)_{ij}$ is computed via
\begin{equation}
	(\Ubf \Vbf^\top)_{ij} = \bigvee_{k=1}^r u_{ik} v_{jk}
\end{equation} 
instead of the standard inner product, where $\bigvee$ denotes disjuction. 
Some works that consider Boolean matrix decomposition include \cite{miettinen2008, lucchese2010, miettinen2011a, bartl2012, ravanbakhsh2016, hess2017, liang2020, kovacs2020}.
For a theoretical comparison between BMF, Boolean matrix factorization and a variant of BMF computed over $\GF(2)$, we refer the reader to the recent paper by DeSantis et al.\ \cite{desantis2020}.

There are previous works that use special purpose hardware to solve linear algebra problems.
O'Malley and Vesselinov \cite{omalley2016} discuss how linear least squares can be solved via QUBO formulations on D-Wave quantum annealing machines. 
They consider both the case when the solution vector is restricted to being binary and when it is real valued.
The real valued case is handled by representing entries in the solution vector using a fixed number of bits. 
O’Malley et al.\ \cite{omalley2018} consider a nonnegative/binary factorization of a real valued matrix of the form $\Abf \approx \Wbf \Hbf$, where $\Wbf$ has nonnegative entries and $\Hbf$ is binary.
To compute this factorization, they use an alternating least squares approach by iteratively alternating between solving for $\Wbf$ and $\Hbf$. 
When solving for $\Hbf$, they use the QUBO formulation from \cite{omalley2016} for the corresponding binary least squares problem and do the computation on a D-Wave quantum annealer. 
Drawing inspiration from \cite{omalley2016}, Ottaviani and Amendola \cite{ottaviani2018} propose a QUBO formulation for low-rank nonnegative matrix factorization and also implement it on a D-Wave machine.
They too use an alternating least squares approach combined with real number representations similar to those in \cite{omalley2016}.
Borle et al.\ \cite{borle2020} show how the Quantum Approximate Optimization Algorithm (QAOA) framework can be used for solving binary linear least squares.
Their paper includes experiments run on an IBM Q machine. 
Unlike our paper, none of the works \cite{omalley2016, omalley2018, ottaviani2018, borle2020} consider binary matrix factorization.
Additionally, an important difference between our work and the decomposition techniques developed in \cite{omalley2018, ottaviani2018} is that those papers update the factor matrices in an alternating fashion.
Our two QUBO formulations, by contrast, solve for \emph{both} factor matrices at the same time, which may help avoid the issue of getting stuck in local minima that alternating algorithms are susceptible to.
However, we do incorporate alternating optimization as a post-processing step in our experiments since this can sometimes further improve the quality of the solutions that come from solving the QUBO formulations.
See the sections \nameref{sec:sampling} and \nameref{sec:experiments} for details.

There has also been a large body of research on utilizing special purpose hardware for data clustering problems, for example \cite{ushijima2017graph, bauckhage2018, shaydulin2018community, negre2020detecting, cohen2020ising} to name a few. 
A recent paper by \c{S}eker et al.\ \cite{seker2020} performs a comprehensive computational study comparing the DA to multiple state-of-the-art solvers on multiple different combinatorial optimization problems.
They find that the DA performs favorably compared to the other solvers, particularly on large problem instances.

\section*{QUBO formulations for BMF} \label{sec:formulations}

Writing out the objective in \eqref{eq:bmf}, we get
\begin{equation} \label{eq:expanded-obj}
	\|\Abf - \Ubf \Vbf^\top\|_\F^2 = \|\Abf\|_\F^2 - 2 \sum_{ijk} a_{ij} u_{ik} v_{jk} + \sum_{ijkk'} u_{ik} u_{ik'} v_{jk} v_{jk'}, 
\end{equation}
where the summations are over $i \in [m]$, $j \in [n]$, and $k, k' \in [r]$.
Our goal is to reformulate this into the QUBO form in \eqref{eq:qubo}. 
The fourth order term in \eqref{eq:expanded-obj} stops us from directly writing \eqref{eq:expanded-obj} on the quadratic form in \eqref{eq:qubo}.
We can get around this by introducing appropriate auxiliary variables and penalties.

\subsection*{Formulation 1}

For our first formulation, we introduce auxiliary variables
\begin{equation} \label{eq:formulation-2-constraints}
	w_{ij}^{(k)} \defeq u_{ik} v_{jk} \;\;\;\; \text{for } i \in [m], \; j \in [n], \; k \in [r].
\end{equation}
We arrange these variables into $m \times n$ matrices $\Wbf^{(k)} = (w_{ij}^{(k)})$.
An equivalent formulation to \eqref{eq:bmf} is then
\begin{equation} \label{eq:bmf-formulation-2}
\begin{aligned}
	&\min_{\Ubf, \Vbf, \{\Wbf^{(k)}\}} \; 
	\| \Abf \|_\F^2 - 2\sum_{ijk} a_{ij} w_{ij}^{(k)} + \sum_{ijkk'} w_{ij}^{(k)} w_{ij}^{(k')} \\
	&\text{s.t.} \; 
	\Ubf \in \{0,1\}^{m \times r}, \; 
	\Vbf \in \{0,1\}^{n \times r}, \; 
	\text{\eqref{eq:formulation-2-constraints} satisfied}.
\end{aligned}
\end{equation}
To incorporate the constraints \eqref{eq:formulation-2-constraints} in the QUBO model, we express them as a penalty instead. 
A standard technique for this \cite{glover2018} is to use a penalty function $f : \{0,1\}^{3} \rightarrow \Rb$ defined via
\begin{equation} \label{eq:f}
f(a,b,c) \defeq bc - 2ba - 2ca + 3a.
\end{equation}
Notice that $f(a,b,c) = 0$ if $a = bc$ and $f(a,b,c) \geq 1$ otherwise.
Letting $\lambda$ be a positive constant, a penalty variant of \eqref{eq:bmf-formulation-2} is
\begin{equation} \label{eq:bmf-formulation-2-pen}
\begin{aligned}
	&\min_{\Ubf, \Vbf, \{\Wbf^{(k)}\}} \; 
	\| \Abf \|_\F^2 - 2\sum_{ijk} a_{ij} w_{ij}^{(k)} + \sum_{ijkk'} w_{ij}^{(k)} w_{ij}^{(k')} + \lambda \sum_{ijk} f(w_{ij}^{(k)}, u_{ik}, v_{jk})  \\
	&\text{s.t.} \; 
	\Ubf \in \{0,1\}^{m \times r}, \; \Vbf \in \{0,1\}^{n \times r}, \; \Wbf^{(k)} \in \{0,1\}^{m \times n} \; \text{for all } k \in [r].
\end{aligned}
\end{equation}

\begin{proposition} \label{prop:formulation-equivalence-2}
	Suppose $\lambda > 2 r \|\Abf\|_\F^2$. A point $(\Ubf, \Vbf, \{\Wbf^{(k)}\})$ minimizes \eqref{eq:bmf-formulation-2} if and only if it minimizes \eqref{eq:bmf-formulation-2-pen}.
\end{proposition}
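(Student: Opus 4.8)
The plan is to run a standard exact-penalty argument, using that both problems range over a finite (binary) set so all minima are attained. First I would name the shared objective
$$ g(\Ubf,\Vbf,\{\Wbf^{(k)}\}) \defeq \|\Abf\|_\F^2 - 2\sum_{ijk} a_{ij} w_{ij}^{(k)} + \sum_{ijkk'} w_{ij}^{(k)} w_{ij}^{(k')} $$
and the penalty term $P \defeq \sum_{ijk} f(w_{ij}^{(k)}, u_{ik}, v_{jk})$, so that \eqref{eq:bmf-formulation-2-pen} minimizes $g + \lambda P$ over all binary triples $(\Ubf,\Vbf,\{\Wbf^{(k)}\})$, while \eqref{eq:bmf-formulation-2} minimizes $g$ over those same triples that additionally satisfy \eqref{eq:formulation-2-constraints}. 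By the stated property of $f$, a binary triple satisfies \eqref{eq:formulation-2-constraints} precisely when $P = 0$, and violates it only with $P \geq 1$; hence feasibility for \eqref{eq:bmf-formulation-2} is exactly the condition $P = 0$, and on this feasible set the two objectives agree. Writing $p^*$ for the optimal value of \eqref{eq:bmf-formulation-2}, the whole claim reduces to showing that no minimizer of $g + \lambda P$ can be infeasible.

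The crux is a crude two-sided estimate on $g$ that forces the penalty to dominate. For the upper side, the all-zeros triple $(\zerobf,\zerobf,\{\zerobf\})$ is feasible and yields $g = \|\Abf\|_\F^2$, so $p^* \leq \|\Abf\|_\F^2$. For the lower side, at any binary triple the term $\sum_{ijkk'} w_{ij}^{(k)} w_{ij}^{(k')}$ is nonnegative, while $a_{ij}, w_{ij}^{(k)} \in \{0,1\}$ together with $\|\Abf\|_\F^2 = \sum_{ij} a_{ij}$ give $\sum_{ijk} a_{ij} w_{ij}^{(k)} \leq r\|\Abf\|_\F^2$; therefore $g \geq (1 - 2r)\|\Abf\|_\F^2$ everywhere. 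Subtracting, $p^* - g \leq 2r\|\Abf\|_\F^2$ uniformly over binary triples.

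With this estimate in hand, take any infeasible triple, where $P \geq 1$. Its penalty value is at least $g + \lambda \geq (1 - 2r)\|\Abf\|_\F^2 + \lambda$, and the hypothesis $\lambda > 2r\|\Abf\|_\F^2$ makes this strictly exceed $\|\Abf\|_\F^2 \geq p^*$. Since a minimizer of \eqref{eq:bmf-formulation-2} is feasible and attains penalty value exactly $p^*$, the minimum of $g + \lambda P$ equals $p^*$ and is never attained at an infeasible triple. Every minimizer of \eqref{eq:bmf-formulation-2-pen} therefore lies in the feasible set, where $g + \lambda P = g$; the two problems share the same minimizers, which delivers both directions of the equivalence at once.

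I expect the only genuine step to be the lower bound $g \geq (1 - 2r)\|\Abf\|_\F^2$, since this is exactly what produces the threshold $2r\|\Abf\|_\F^2$ on $\lambda$. It leans on binarity twice: once to discard the nonnegative quadratic term in $w$, and once to replace $a_{ij}^2$ by $a_{ij}$ in identifying $\sum_{ij} a_{ij}$ with $\|\Abf\|_\F^2$. Everything else is bookkeeping once feasibility has been identified with $P = 0$.
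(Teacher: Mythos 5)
Your proposal is correct and follows essentially the same route as the paper's proof: the all-zeros point gives the upper bound $\|\Abf\|_\F^2$ on the optimal value, the bound $-2\sum_{ijk} a_{ij} w_{ij}^{(k)} + \sum_{ijkk'} w_{ij}^{(k)} w_{ij}^{(k')} \geq -2r\|\Abf\|_\F^2$ shows any constraint-violating point has penalized objective exceeding $\|\Abf\|_\F^2$, and hence every minimizer of the penalized problem is feasible, where the two objectives coincide. Your packaging of the final equivalence (rather than two separate contradiction arguments) is only a cosmetic streamlining.
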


\begin{proof}
	For brevity, we denote the objectives in \eqref{eq:bmf-formulation-2} and \eqref{eq:bmf-formulation-2-pen} by $\OBJ_1$ and $\OBJ_2$, respectively.
	Setting all entries in the matrices $\Ubf$, $\Vbf$, $\Wbf^{(1)}, \ldots, \Wbf^{(r)}$ to zero would yield an objective value of $\OBJ_2(\Ubf,\Vbf,\{\Wbf^{(k)}\}) = \|\Abf\|_\F^2$.
	Moreover,
	\begin{equation}
		- 2\sum_{ijk} a_{ij} w_{ij}^{(k)} + \sum_{ijkk'} w_{ij}^{(k)} w_{ij}^{(k')} \geq - 2 r \|\Abf\|_\F^2,
	\end{equation}
	so any point $(\Ubf, \Vbf, \{\Wbf^{(k)}\})$ that violates \eqref{eq:formulation-2-constraints} would satisfy $\OBJ_2(\Ubf, \Vbf, \{\Wbf^{(k)}\}) > \| \Abf \|_\F^2$ and therefore could not be a minimizer of \eqref{eq:bmf-formulation-2-pen}.
	Any minimizer of \eqref{eq:bmf-formulation-2-pen} must therefore satisfy \eqref{eq:formulation-2-constraints}.
	
	Suppose $p^* \defeq (\Ubf^*, \Vbf^*, \{\Wbf^{(k)*}\})$ minimizes \eqref{eq:bmf-formulation-2}. 
	Since $p^*$ satisfies \eqref{eq:formulation-2-constraints}, all penalty terms in \eqref{eq:bmf-formulation-2-pen} are zero for this point, and therefore $\OBJ_1(p^*) = \OBJ_2(p^*)$.
	$p^*$ is also a minimizer of \eqref{eq:bmf-formulation-2-pen}.
	If it was not, there would be a minimizer $p^\dagger$  of \eqref{eq:bmf-formulation-2-pen} for which $\OBJ_2(p^\dagger) < \OBJ_2(p^*)$ and which would satisfy \eqref{eq:formulation-2-constraints}. 
	$p^\dagger$ would therefore be a feasible solution for \eqref{eq:bmf-formulation-2} and it would satisfy $\OBJ_1(p^\dagger) = \OBJ_2(p^\dagger) < \OBJ_1(p^*)$, which contradicts the optimality of $p^*$.
	
	Suppose $p^\dagger \defeq (\Ubf^\dagger, \Vbf^\dagger, \{\Wbf^{(k)\dagger}\})$ minimizes \eqref{eq:bmf-formulation-2-pen}.
	Then $p^\dagger$ satisfies \eqref{eq:formulation-2-constraints} and is therefore a feasible solution to \eqref{eq:bmf-formulation-2} satisfying $\OBJ_1(p^\dagger) = \OBJ_2(p^\dagger)$.
	$p^\dagger$ is also a minimizer of \eqref{eq:bmf-formulation-2}.
	If it was not, there would be a minimizer $p^*$ of \eqref{eq:bmf-formulation-2} which would satisfy
	$\OBJ_2(p^*) = \OBJ_1(p^*) < \OBJ_1(p^\dagger) = \OBJ_2(p^\dagger)$, contradicting the optimality of $p^\dagger$.
\end{proof}

Since \eqref{eq:bmf} and \eqref{eq:bmf-formulation-2} are equivalent, Proposition~\ref{prop:formulation-equivalence-2} implies that the matrices $\Ubf$ and $\Vbf$ we get from minimizing \eqref{eq:bmf-formulation-2-pen} are minimizers of \eqref{eq:bmf} when $\lambda$ is sufficiently large.

We now state the QUBO formulation of \eqref{eq:bmf-formulation-2-pen}. 
Define $\ubf \defeq \vec(\Ubf)$, $\vbf \defeq \vec(\Vbf)$, and $\wbf^{(k)} \defeq \vec(\Wbf^{(k)})$ for each $k \in [r]$, and let
\begin{equation} \label{eq:x}
	\xbf \defeq 
	\begin{bmatrix}
		\ubf \; ; & \vbf \; ; & \wbf^{(1)} \; ; & \wbf^{(2)} \; ; & \cdots \; ; & \wbf^{(r)} 
	\end{bmatrix},
\end{equation}
where $\xbf$ is a column vector of length $(m+n+mn)r$.
Furthermore, define the QUBO matrix as
\begin{equation} \label{eq:Q}
	\Qbf \defeq 
	\begin{bmatrix}
		\Qbf^{(1)} & \Qbf^{(2)} \\
		\zerobf & \Qbf^{(3)}
	\end{bmatrix} \in \Rb^{(m+n+mn)r \times (m+n+mn)r},
\end{equation}
where
\begin{equation}
\begin{aligned}
& \Qbf^{(1)} \defeq \frac{\lambda}{2}
\begin{bmatrix}
\zerobf_{mr} 						& \Ibf_r \otimes \onebf_{m \times n} \\
\Ibf_r \otimes \onebf_{n \times m} 	& \zerobf_{nr}
\end{bmatrix}, \\
& \Qbf^{(2)} \defeq -2 \lambda
\begin{bmatrix}
\Ibf_r \otimes \onebf_{1 \times n} \otimes \Ibf_m \\
\Ibf_{nr} \otimes \onebf_{1 \times m} 
\end{bmatrix}, \\
& \Qbf^{(3)} \defeq \onebf_r \otimes \Ibf_{mn} - 2 \diag\big((\onebf_{r \times 1} \otimes \Ibf_{mn}) \vec(\Abf)\big) + 3\lambda \Ibf_{mnr}.
\end{aligned}
\end{equation}
\begin{proposition}
	With $\xbf$ and $\Qbf$ defined as in \eqref{eq:x} and \eqref{eq:Q}, respectively, the problem \eqref{eq:bmf-formulation-2-pen} can be written as
	\begin{equation} \label{eq:bmf-formulation-2-qubo}
		\min_{\xbf} \; \|\Abf\|_\F^2 + \xbf^\top \Qbf \xbf \;\;\;\;
		\textup{s.t.} \;\; \xbf \in \{0,1\}^{(m+n+mn)r}.
	\end{equation}
\end{proposition}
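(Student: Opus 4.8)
The plan is to expand the quadratic form $\xbf^\top \Qbf \xbf$ using the block structure of $\Qbf$ and match each resulting piece against a term in the objective of \eqref{eq:bmf-formulation-2-pen}. First I would partition the decision vector as $\xbf = [\ybf; \wbf]$, where $\ybf \defeq [\ubf; \vbf]$ collects the factor variables and $\wbf \defeq [\wbf^{(1)}; \cdots; \wbf^{(r)}]$ collects the auxiliary variables. Because the lower-left block of $\Qbf$ is zero, the block layout immediately gives
\begin{equation}
	\xbf^\top \Qbf \xbf = \ybf^\top \Qbf^{(1)} \ybf + \ybf^\top \Qbf^{(2)} \wbf + \wbf^\top \Qbf^{(3)} \wbf,
\end{equation}
so the proof reduces to verifying that these three pieces reproduce, respectively, the quadratic $\lambda \sum_{ijk} u_{ik} v_{jk}$ penalty term, the two cross penalty terms pairing each $w_{ij}^{(k)}$ with $u_{ik}$ and with $v_{jk}$, and all remaining terms of the objective.

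For the first piece I would use that $\Ibf_r \otimes \onebf_{m \times n}$ is block diagonal, with the $k$th block coupling column $k$ of $\Ubf$ with column $k$ of $\Vbf$; since that block is all ones it sums over both $i$ and $j$, and after combining the two symmetric off-diagonal halves one gets $\ybf^\top \Qbf^{(1)} \ybf = \lambda \sum_{ijk} u_{ik} v_{jk}$. For the cross piece, the key observation is that $\Ibf_r \otimes \onebf_{1 \times n} \otimes \Ibf_m$ marginalizes each $\wbf^{(k)} = \vec(\Wbf^{(k)})$ over the index $j$ while retaining $i$ and $k$, and $\Ibf_{nr} \otimes \onebf_{1 \times m}$ marginalizes over $i$ while retaining $j$ and $k$; contracting these against $\ubf$ and $\vbf$ produces exactly $-2\lambda \sum_{ijk} u_{ik} w_{ij}^{(k)}$ and $-2\lambda \sum_{ijk} v_{jk} w_{ij}^{(k)}$.

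For the third piece I would treat the three summands of $\Qbf^{(3)}$ in turn. The factor $\onebf_r \otimes \Ibf_{mn}$ couples every pair of blocks $\wbf^{(k)}, \wbf^{(k')}$ entrywise, yielding $\sum_{ijkk'} w_{ij}^{(k)} w_{ij}^{(k')}$. The vector $(\onebf_{r \times 1} \otimes \Ibf_{mn}) \vec(\Abf)$ is merely $r$ stacked copies of $\vec(\Abf)$, so the diagonal term contributes $-2 \sum_{ijk} a_{ij} (w_{ij}^{(k)})^2$, while the last term contributes $3\lambda \sum_{ijk} (w_{ij}^{(k)})^2$. At this point I would invoke the binary constraint $w_{ij}^{(k)} \in \{0,1\}$, so that $(w_{ij}^{(k)})^2 = w_{ij}^{(k)}$, collapsing these into the linear terms $-2\sum_{ijk} a_{ij} w_{ij}^{(k)}$ and $3\lambda \sum_{ijk} w_{ij}^{(k)}$. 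Summing all three pieces and recognizing that $\lambda \sum_{ijk}(u_{ik} v_{jk} - 2 u_{ik} w_{ij}^{(k)} - 2 v_{jk} w_{ij}^{(k)} + 3 w_{ij}^{(k)}) = \lambda \sum_{ijk} f(w_{ij}^{(k)}, u_{ik}, v_{jk})$ recovers precisely the non-constant part of \eqref{eq:bmf-formulation-2-pen}; adding the constant $\|\Abf\|_\F^2$ then completes the identity.

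The main obstacle I anticipate is the index bookkeeping: one must carefully track how the column-stacking convention of $\vec(\cdot)$ interacts with each Kronecker factor, so that every marginalization lands on the intended index and the ordering of factors stays consistent with the layout of $\ubf$, $\vbf$, and the blocks $\wbf^{(k)}$ inside $\xbf$. This is the only delicate point; once the indexing conventions are pinned down, each term-by-term match is a routine computation, and the binary substitution $(w_{ij}^{(k)})^2 = w_{ij}^{(k)}$ is the only nontrivial algebraic step.
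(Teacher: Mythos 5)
Your proposal is correct: the block decomposition of $\xbf^\top\Qbf\xbf$, the term-by-term matching against the objective of \eqref{eq:bmf-formulation-2-pen}, and the substitution $(w_{ij}^{(k)})^2 = w_{ij}^{(k)}$ for binary variables together constitute exactly the ``straightforward but somewhat tedious calculation'' that the paper omits. Each of your identifications (the $\Qbf^{(1)}$ term giving $\lambda\sum_{ijk}u_{ik}v_{jk}$, the two marginalizations in $\Qbf^{(2)}$ giving the cross terms, and the three summands of $\Qbf^{(3)}$) checks out against the column-stacking convention of $\vec(\cdot)$ used in \eqref{eq:x}.
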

The proof is a straightforward but somewhat tedious calculation and is omitted.
Although removing the constant $\|\Abf\|_\F^2$ does not affect the minimizer(s) of \eqref{eq:bmf-formulation-2-qubo}, it can serve as a useful target: If $\xbf^\top \Qbf \xbf = - \|\Abf\|_\F^2$, then we know that we have found a global minimum, provided that the condition on $\lambda$ in Proposition~\ref{prop:formulation-equivalence-2} is satisfied. 
Such a target value can be supplied to QUBO solvers like D-Wave's QBSolv to allow for early termination when the target is reached.

\subsection*{Formulation 2}

For our second formulation, we again consider \eqref{eq:expanded-obj} and introduce auxiliary variables
\begin{equation} \label{eq:formulation-1-constraints}
\begin{aligned}
\widetilde{u}_{i (kk')} &\defeq u_{ik} u_{ik'} \;\;\;\; \text{for } i \in [m], \; k, k' \in [r],\\
\widetilde{v}_{j (kk')} &\defeq v_{jk} v_{jk'} \;\;\;\; \text{for } j \in [n], \; k, k' \in [r].
\end{aligned}
\end{equation}

We treat $\widetilde{\Ubf} = (\widetilde{u}_{i (kk')})$ and $\widetilde{\Vbf} = (\widetilde{v}_{j (kk')})$ as matrices of size $m \times r^2$ and $n \times r^2$, respectively, with $\widetilde{\ubf}_{*(kk')}$ being the $(k + k' r)$th column of $\widetilde{\Ubf}$, with similar column ordering for $\widetilde{\Vbf}$.
An equivalent formulation to \eqref{eq:bmf} is then
\begin{equation} \label{eq:bmf-formulation-1}
\begin{aligned}
&\min_{\Ubf, \Vbf, \widetilde{\Ubf}, \widetilde{\Vbf}} \; 
\| \Abf \|_\F^2 - 2\sum_{ijk} a_{ij} u_{ik} v_{jk} + \sum_{ijkk'} \widetilde{u}_{i(kk')} \widetilde{v}_{j(kk')} \\
&\text{s.t.} \; 
\Ubf \in \{0,1\}^{m \times r}, \; 
\Vbf \in \{0,1\}^{n \times r}, \; 
\text{\eqref{eq:formulation-1-constraints} satisfied}.
\end{aligned}
\end{equation}
We use the function $f$ defined in \eqref{eq:f} to incorporate the constraints \eqref{eq:formulation-1-constraints} in the objective. 
A penalty variant of \eqref{eq:bmf-formulation-1} is 
\begin{equation} \label{eq:bmf-formulation-1-pen}
\begin{aligned}
&\min_{\Ubf, \Vbf, \widetilde{\Ubf}, \widetilde{\Vbf}} \; 
\| \Abf \|_\F^2 - 2\sum_{ijk} a_{ij} u_{ik} v_{jk} + \sum_{ijkk'} \widetilde{u}_{i(kk')} \widetilde{v}_{j(kk')} \\
&\;\;\;\;\;\;\;\;\;\;\;\;\;\;\;\; + \lambda \sum_{ikk'} f(\widetilde{u}_{i(kk')}, u_{ik}, u_{ik'}) + \lambda \sum_{jkk'} f(\widetilde{v}_{j(kk')}, v_{jk}, v_{jk'}) \\
&\text{s.t.} \; 
\Ubf \in \{0,1\}^{m \times r}, \; \Vbf \in \{0,1\}^{n \times r}, \; \widetilde{\Ubf} \in \{0,1\}^{m \times r^2}, \; \widetilde{\Vbf} \in \{0,1\}^{n \times r^2}.
\end{aligned}
\end{equation}

\begin{proposition} \label{prop:formulation-equivalence}
	Suppose $\lambda > 2 r \|\Abf\|_\F^2$. A point $(\Ubf, \Vbf, \widetilde{\Ubf}, \widetilde{\Vbf})$ minimizes \eqref{eq:bmf-formulation-1} if and only if it minimizes \eqref{eq:bmf-formulation-1-pen}.
\end{proposition}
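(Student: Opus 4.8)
The plan is to follow the proof of Proposition~\ref{prop:formulation-equivalence-2} essentially verbatim in its logical skeleton, since the two statements differ only in which auxiliary variables and penalties appear; the sole place where a fresh computation is needed is the lower bound on the data-fidelity part of the objective. Write $\OBJ_1$ and $\OBJ_2$ for the objectives of \eqref{eq:bmf-formulation-1} and \eqref{eq:bmf-formulation-1-pen}, respectively. As before, the all-zeros assignment $(\Ubf,\Vbf,\widetilde{\Ubf},\widetilde{\Vbf}) = (\zerobf,\zerobf,\zerobf,\zerobf)$ is feasible for both problems, satisfies \eqref{eq:formulation-1-constraints}, makes every penalty term vanish, and attains $\OBJ_2 = \|\Abf\|_\F^2$. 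This supplies the reference value that a minimizer of \eqref{eq:bmf-formulation-1-pen} must not exceed.

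The key step, and the only one that is not a transcription of the earlier argument, is to establish
\begin{equation}
-2\sum_{ijk} a_{ij} u_{ik} v_{jk} + \sum_{ijkk'} \widetilde{u}_{i(kk')} \widetilde{v}_{j(kk')} \;\geq\; -2r\|\Abf\|_\F^2
\end{equation}
for every binary assignment of the variables. I would argue this by binarity: for fixed $(i,j)$ one has $\sum_k a_{ij} u_{ik} v_{jk} \leq r\,a_{ij}$ because each $u_{ik}v_{jk}\in\{0,1\}$, so the first sum is at most $r\sum_{ij} a_{ij} = r\|\Abf\|_\F^2$ (using $a_{ij}^2 = a_{ij}$), while the second sum is nonnegative since all $\widetilde{u},\widetilde{v}$ are binary. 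Granting this bound, recall from \eqref{eq:f} that $f \geq 1$ whenever its defining equality fails; hence any point violating \eqref{eq:formulation-1-constraints} incurs at least one penalty of size $\lambda$ and therefore satisfies $\OBJ_2 \geq \|\Abf\|_\F^2 - 2r\|\Abf\|_\F^2 + \lambda > \|\Abf\|_\F^2$ under the hypothesis $\lambda > 2r\|\Abf\|_\F^2$. Comparing with the all-zeros value shows that no such point can minimize \eqref{eq:bmf-formulation-1-pen}, so every minimizer of the penalized problem satisfies \eqref{eq:formulation-1-constraints}.

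Once feasibility of the penalized minimizers is in hand, the two-way equivalence follows by the same pair of contradiction arguments used for Proposition~\ref{prop:formulation-equivalence-2}. On any point satisfying \eqref{eq:formulation-1-constraints} the penalties are zero, so $\OBJ_1$ and $\OBJ_2$ agree there. If $p^*$ minimizes \eqref{eq:bmf-formulation-1}, it is feasible for \eqref{eq:bmf-formulation-1-pen} with equal objective value, and a strictly better minimizer $p^\dagger$ of the penalized problem would, being feasible, give $\OBJ_1(p^\dagger) = \OBJ_2(p^\dagger) < \OBJ_1(p^*)$, a contradiction; the reverse direction is symmetric. This yields the claimed if-and-only-if.

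I do not anticipate a genuine obstacle. The presence of two families of penalties (over $\widetilde{\Ubf}$ and over $\widetilde{\Vbf}$) rather than one is immaterial, since the argument only needs a single violated penalty to dominate, and the data term is now split as an original-variable linear part plus a nonnegative surrogate part rather than the quadratic-in-$w$ expression of Formulation~1. Verifying that this new split still admits the uniform lower bound $-2r\|\Abf\|_\F^2$ is the one point to check with care, and the binarity estimate above settles it.
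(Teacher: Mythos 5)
Your proposal is correct and follows the same route the paper intends: it transplants the proof of Proposition~\ref{prop:formulation-equivalence-2} (all-zeros reference value $\|\Abf\|_\F^2$, a uniform lower bound of $-2r\|\Abf\|_\F^2$ on the non-constant part of the objective so that a single violated penalty forces $\OBJ_2 > \|\Abf\|_\F^2$, then the two symmetric contradiction arguments), and your binarity estimate $\sum_{ijk} a_{ij}u_{ik}v_{jk} \le r\sum_{ij}a_{ij} = r\|\Abf\|_\F^2$ together with nonnegativity of $\sum_{ijkk'}\widetilde{u}_{i(kk')}\widetilde{v}_{j(kk')}$ is exactly the one new computation needed. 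The paper omits this proof as ``similar'' to that of Proposition~\ref{prop:formulation-equivalence-2}, and your write-up supplies precisely that similarity.
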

The proof is similar to that for Proposition~\ref{prop:formulation-equivalence-2} and is omitted. 
Since \eqref{eq:bmf} and \eqref{eq:bmf-formulation-1} are equivalent, Proposition~\ref{prop:formulation-equivalence} implies that the matrices $\Ubf$ and $\Vbf$ we get from minimizing \eqref{eq:bmf-formulation-1-pen} are minimizers of \eqref{eq:bmf} when $\lambda$ is sufficiently large.

We now state the QUBO formulation of \eqref{eq:bmf-formulation-1-pen}.
Define $\ubf$ and $\vbf$ as before.
Furthermore, define $\widetilde{\ubf} \defeq \vec(\widetilde{\Ubf})$, $\widetilde{\vbf} \defeq \vec(\widetilde{\Vbf})$ and 
\begin{equation} \label{eq:y}
\ybf \defeq 
\begin{bmatrix}
\ubf\; ; & \vbf\; ; & \widetilde{\ubf}\; ; & \widetilde{\vbf} 
\end{bmatrix},
\end{equation}
where $\ybf$ is a column vector of length $(m+n)(r+r^2)$.
Furthermore, define the QUBO matrix as
\begin{equation} \label{eq:P}
\Pbf \defeq \begin{bmatrix}
\Pbf^{(1)} & \Pbf^{(2)} \\
\zerobf	& \Pbf^{(3)}
\end{bmatrix} \in \Rb^{(m+n)(r+r^2) \times (m+n)(r+r^2)},
\end{equation}
where 
\begin{equation}
\begin{aligned}
&\Pbf^{(1)} \defeq
\begin{bmatrix}
\lambda \onebf_r \otimes \Ibf_m & -2 \Ibf_r \otimes \Abf \\
\zerobf_{nr \times mr} & \lambda \onebf_r \otimes \Ibf_n
\end{bmatrix}, \\
&\Pbf^{(2)} \defeq -2 \lambda
\begin{bmatrix}
\onebf_{1 \times r} \otimes \Ibf_{mr}	& \zerobf_{mr \times nr^2} \\
\zerobf_{nr \times mr^2} 			& \onebf_{1 \times r} \otimes \Ibf_{nr}
\end{bmatrix} 
-2 \lambda
\begin{bmatrix}
\Ibf_{r} \otimes \onebf_{1 \times r} \otimes \Ibf_m 	& \zerobf_{mr \times nr^2} \\
\zerobf_{nr \times mr^2} 						& \Ibf_r \otimes \onebf_{1 \times r} \otimes \Ibf_n
\end{bmatrix}, \\
&\Pbf^{(3)} \defeq
\begin{bmatrix}
3\lambda \Ibf_{mr^2} & \Ibf_{r^2} \otimes \onebf_{m \times n} \\
\zerobf_{nr^2 \times mr^2} & 3 \lambda \Ibf_{nr^2}
\end{bmatrix}.
\end{aligned}
\end{equation}
\begin{proposition}
	With $\ybf$ and $\Pbf$ defined as in \eqref{eq:y} and \eqref{eq:P}, respectively, the problem \eqref{eq:bmf-formulation-1-pen} can be written as
	\begin{equation} \label{eq:bmf-formulation-1-qubo}
	\min_{\ybf} \; \|\Abf\|_\F^2 + \ybf^\top \Pbf \ybf \;\;\;\; 
	\textup{s.t.} \;\; \ybf \in \{0,1\}^{(m+n)(r+r^2)}.
	\end{equation}
\end{proposition}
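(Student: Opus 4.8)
The plan is to expand the quadratic form $\ybf^\top\Pbf\ybf$ using the block structures of $\ybf$ and $\Pbf$, and then match each resulting term against the objective of \eqref{eq:bmf-formulation-1-pen} after substituting the definition \eqref{eq:f} of $f$ and discarding the constant $\|\Abf\|_\F^2$. Concretely, I would partition $\ybf$ as $[\ybf_1;\ybf_2]$ with $\ybf_1 \defeq [\ubf;\vbf]$ and $\ybf_2 \defeq [\widetilde{\ubf};\widetilde{\vbf}]$, conformally with the $2\times 2$ block form of $\Pbf$. Because the lower-left block of $\Pbf$ is zero, the form collapses to three pieces,
\begin{equation}
\ybf^\top\Pbf\ybf = \ybf_1^\top\Pbf^{(1)}\ybf_1 + \ybf_1^\top\Pbf^{(2)}\ybf_2 + \ybf_2^\top\Pbf^{(3)}\ybf_2,
\end{equation}
and it suffices to show that these reproduce, respectively, (i) the data term together with the quadratic $bc$ parts of the two penalties, (ii) the mixed $-2ba-2ca$ parts of the penalties, and (iii) the reconstruction term together with the linear $3a$ parts.

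For the first piece I would use the standard fact that for block-diagonal and all-ones Kronecker factors the bilinear form reduces to sums over the $\vec$ indices: the off-diagonal block $-2\Ibf_r\otimes\Abf$ yields $\ubf^\top(-2\Ibf_r\otimes\Abf)\vbf = -2\sum_{ijk}a_{ij}u_{ik}v_{jk}$, while the diagonal blocks $\lambda\onebf_r\otimes\Ibf_m$ and $\lambda\onebf_r\otimes\Ibf_n$ yield $\lambda\sum_{ikk'}u_{ik}u_{ik'}$ and $\lambda\sum_{jkk'}v_{jk}v_{jk'}$, exactly the $u_{ik}u_{ik'}$ and $v_{jk}v_{jk'}$ contributions of $f$. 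For the third piece, the diagonal blocks $3\lambda\Ibf$ contribute $3\lambda\sum\widetilde{u}_{i(kk')}$ and $3\lambda\sum\widetilde{v}_{j(kk')}$ after using $y=y^2$ on $\{0,1\}$, and the off-diagonal block $\Ibf_{r^2}\otimes\onebf_{m\times n}$ contributes $\sum_{ijkk'}\widetilde{u}_{i(kk')}\widetilde{v}_{j(kk')}$, the reconstruction term. Each of these is a direct computation once the index map induced by each Kronecker product is written out.

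The step I expect to be the main obstacle is verifying the two summands of $\Pbf^{(2)}$, because they must reproduce the $-2ba$ and $-2ca$ penalty terms separately, and this depends critically on the stated column ordering of $\widetilde{\Ubf}$ and $\widetilde{\Vbf}$. I would check that $-2\lambda(\onebf_{1 \times r}\otimes\Ibf_{mr})$ contracts $\widetilde{u}_{i(kk')}$ against $u_{ik}$ (summing over the second pair index), while $-2\lambda(\Ibf_r\otimes\onebf_{1 \times r}\otimes\Ibf_m)$ contracts it against $u_{ik'}$ (summing over the first pair index), with the analogous statements for the $\Vbf$ blocks. This requires tracking which Kronecker factor corresponds to the fast versus slow index in the vectorization, and it is the one place where an off-by-one in the column indexing $(kk')\mapsto k + k'r$ would silently swap the two penalty contributions; once the ordering is fixed consistently, the two summands together give $-2\lambda\sum_{ikk'}(u_{ik}+u_{ik'})\widetilde{u}_{i(kk')}$ and its $\Vbf$ counterpart. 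Summing the three matched pieces and restoring the constant $\|\Abf\|_\F^2$ then recovers the objective of \eqref{eq:bmf-formulation-1-pen}, completing the proof.
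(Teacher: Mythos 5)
Your proposal is correct: the paper itself omits this proof as ``straightforward,'' and your block-by-block expansion of $\ybf^\top\Pbf\ybf$ is exactly the intended direct calculation, with each piece matching the data term, the reconstruction term, and the $bc$, $-2ba-2ca$, and $3a$ parts of the penalties as you describe. You also correctly identify and resolve the only delicate point, namely that with the column ordering $(kk')\mapsto k+k'r$ the factor $\onebf_{1\times r}\otimes\Ibf_{mr}$ contracts $\widetilde{u}_{i(kk')}$ against $u_{ik}$ (summing the slow index $k'$) while $\Ibf_r\otimes\onebf_{1\times r}\otimes\Ibf_m$ contracts it against $u_{ik'}$ (summing the fast index $k$), so the two summands of $\Pbf^{(2)}$ together yield $-2\lambda\sum_{ikk'}(u_{ik}+u_{ik'})\widetilde{u}_{i(kk')}$ as required.
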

The proof is a straightforward and is omitted.

\section*{Useful constraints for data analysis} \label{sec:constraints}

In this section we show how certain constraints that are helpful for data mining tasks easily can be incorporated into the QUBO formulations.
One approach to clustering of the rows and/or columns of a binary matrix $\Abf$ is to compute a BMF $\Abf \approx \Ubf \Vbf^\top$ and then use the information in $\Ubf$ and $\Vbf$ to build the clusters.
This idea is used e.g.\ by \cite{zhang2007, zhang2010} for gene expression sample clustering and document clustering.
For gene expression data, the rows of $\Abf$ represent genes and the columns represent samples, e.g.\ from different people. 
An unsupervised data mining task on such a dataset could be to identify and cluster people based on if they have cancer or not.
One way to do this is to compute a rank-2 BMF of $\Abf$ and assign sample $j$ to cluster $k \in \{1, 2\}$ if $v_{jk} = 1$.
In many cases, it is reasonable to require that each column belongs to precisely one cluster.
For example, when clustering people based on if they have cancer or not, we want to assign every person to precisely one of two clusters. 
Such a requirement can be incorporated by enforcing that 
\begin{equation} \label{eq:V-constraint}
	\sum_k v_{jk} = 1 \;\;\;\; \text{for all } j.
\end{equation}
A penalty variant of this constraint is 
\begin{equation} \label{eq:V-constraint-pen}
	\lambda \sum_{j} \Big( 1 - \sum_{k} v_{jk} + 2 \sum_{k < k'} v_{jk} v_{jk'} \Big),
\end{equation}
where $\lambda > 0$. Since $\Vbf$ is binary, the penalty is zero when \eqref{eq:V-constraint} is satisfied, and at least $\lambda$ otherwise.
This penalty can simply be added to the objectives in \eqref{eq:bmf-formulation-2-pen} and \eqref{eq:bmf-formulation-1-pen}. 
As before, we can ensure that the penalized and constrained formulations have the same minimizers by choosing $\lambda > 2 r \|\Abf\|_\F^2$.

The penalty \eqref{eq:V-constraint-pen} is straightforward to incorporate into either QUBO formulation.
Define an $(m+n)r \times (m+n)r$ matrix
\begin{equation}
	\Cbf \defeq 
	\begin{bmatrix}
		\zerobf & \zerobf \\
		\zerobf & \lambda (\onebf_r \otimes \Ibf_n - 2 \Ibf_{nr})
	\end{bmatrix}.
\end{equation}
The QUBO formulations in \eqref{eq:bmf-formulation-2-qubo} and \eqref{eq:bmf-formulation-1-qubo} are easily modified to incorporate \eqref{eq:V-constraint-pen} by defining modified QUBO matrices
\begin{equation}
\begin{aligned}
	&\Qbf' \defeq 
	\begin{bmatrix}
		\Qbf^{(1)} + \Cbf & \Qbf^{(2)} \\
		\zerobf 	& \Qbf^{(3)}
	\end{bmatrix}, \\
	&\Pbf' \defeq
	\begin{bmatrix}
		\Pbf^{(1)} + \Cbf & \Pbf^{(2)} \\ 
		\zerobf 	& \Pbf^{(3)}
	\end{bmatrix},
\end{aligned}
\end{equation}
where the submatrices $\Qbf^{(i)}$ and $\Pbf^{(i)}$ are defined as before.

\section*{Handling large rectangular matrices} \label{sec:sampling}

In this section, we present a strategy for handling large rectangular matrices.
We consider the case when $\Abf$ is $m \times n$ with $m \gg n$ and $n$ is of moderate size.
These ideas also apply when $n \gg m$ and $m$ is of moderate size.
Random sampling of rows and columns is a popular technique in numerical linear algebra for compressing large matrices.
These compressed matrices are then used instead of the full matrices in computations.
For an introduction to this topic we recommend the survey by Mahoney \cite{mahoney2011}.

A popular sampling approach is to sample according to the \emph{leverage scores} of the matrix.
Suppose $\Abf \in \Rb^{m \times n}$ is a nonzero matrix, and let $\Bbf \in \Rb^{m \times \rank(\Abf)}$ be an orthonormal matrix whose columns form a basis for $\range(\Abf)$. 
The leverage scores of $\Abf$ are defined as
$\ell_i(\Abf) \defeq \| \Bbf_{i *} \|_2^2$ for $i \in [m]$.
$\Bbf$ can be computed via, e.g., the singular value decomposition (SVD).
The cost $O(mn^2)$ of computing the SVD of $\Abf$ is small compared to the cost of solving the BMF.
If this cost proves to be too expensive, then there are techniques for estimating the leverage scores that only cost $O(mn \log m)$ \cite{drineas2012}.
When sampling rows of $\Abf$ according to the leverage scores, we sample the $i$th row of $\Abf$ with probability
$p_i \defeq \ell_i(\Abf)/\rank(\Abf)$ for $i \in [m]$.
This definition guarantees that $\sum_i p_i = 1$.
We use leverage score sampling as a heuristic for compressing $\Abf$ by sampling $s \ll m$ of the rows of $\Abf$ with replacement according to the distribution $(p_i)$ and putting these in a new matrix $\Abf^{(s)} \in \Rb^{s \times n}$.
We then compute a rank-$r$ BMF $\Abf^{(s)} \approx \Ubf^{(s)} \Vbf^\top$.
To get a BMF for the original matrix $\Abf$, we then solve the binary least squares (BLS) problem 
\begin{equation} \label{eq:BLS}
	\Ubf \defeq \argmin_{\Ubf' \in \{0,1\}^{m \times r}} \| \Abf - \Ubf' \Vbf^\top \|_\F^2,
\end{equation}
where $\Vbf$ comes from the factorization of $\Abf^{(s)}$.
By expanding the objective, the problem in \eqref{eq:BLS} can be written as $m$ independent BLS problems involving $r$ binary variables.
These BLS problems can be solved via a QUBO formulation.
As discussed in \cite{omalley2016, omalley2018}, such a formulation is easy to derive by noting that the BLS objective can be written as 
\begin{equation}
	\| \Mbf \xbf - \ybf \|_2^2 = \xbf^\top \big(\Mbf^\top \Mbf - 2 \diag(\ybf^\top \Mbf) \big) \xbf + \|\ybf\|_2^2.
\end{equation}
Setting $\Qbf \defeq (\Mbf^\top \Mbf - 2 \diag(\ybf^\top \Mbf))$ gives us a QUBO objective as in \eqref{eq:qubo}.
Alternatively, when $r$ is small, the optimal solution to each BLS problem can be found by simply testing all $2^r$ possible solutions.
As an optional step after computing $\Ubf$, a few additional alternating BLS steps can be added.
This is done by minimizing the objective in \eqref{eq:BLS} in an alternating fashion, first solving for $\Vbf$ and treating $\Ubf$ as fixed, and then solving for $\Ubf$ and treating $\Vbf$ as fixed.

\section*{Experiments} \label{sec:experiments}

\begin{table}[ht!]
	\centering
	\caption{
		Mean relative error for synthetic $\Abf$ with an exact decomposition. 
		The $^*$ symbol indicates methods we propose. 
		Best results are \underline{underlined}.
		\label{tab:synth-exp-exact-rank}
	}
	\begin{tabular}{lccccc}  
		\toprule
		& \multicolumn{5}{c}{Target ranks $r$ ($m = 30$)}\\
		\cmidrule(lr){2-6}
		Method & 1 & 2 & 3 & 4 & 5 \\
		\midrule
		$^*$DA 		&\underline{0}&\underline{0}&\underline{0}&\underline{0}&\underline{0}\\
		$^*$DA+ALS 	&\underline{0}&\underline{0}&\underline{0}&\underline{0}&\underline{0}\\
		Penalized 	&\underline{0}&\underline{0}&\underline{0}&0.0170		&0.0148		  \\
		Thresholded	&\underline{0}&\underline{0}&\underline{0}&0.0052		&0.0273		  \\
		$^*$Baseline&0.8265		  &0.8706		&0.8157		  &0.7434		&0.6977		  \\
		\bottomrule
	\end{tabular}
	\begin{tabular}{lccccc}  
		\toprule
		& \multicolumn{5}{c}{Target ranks $r$ ($m = 2000$)}\\
		\cmidrule(lr){2-6}
		Method & 1 & 2 & 3 & 4 & 5 \\
		\midrule		
		$^*$DA		&\underline{0}&\underline{0}&\underline{0}&\underline{0}&\underline{0}\\
		$^*$DA+ALS	&\underline{0}&\underline{0}&\underline{0}&\underline{0}&\underline{0}\\
		Penalized	&\underline{0}&\underline{0}&\underline{0}&\underline{0}&0.0361		  \\
		Thresholded	&\underline{0}&\underline{0}&\underline{0}&0.0330		  &0.0594	  \\
		$^*$Baseline&0.9181	    &0.9075		  &0.8730		&0.8101	 	  &0.7585		  \\
		\bottomrule
	\end{tabular}
	\begin{tabular}{lccccc}  
		\toprule
		& \multicolumn{5}{c}{Target ranks $r$ ($m = 50000$)}\\
		\cmidrule(lr){2-6}
		Method & 1 & 2 & 3 & 4 & 5 \\
		\midrule		
		$^*$DA		&\underline{0}&\underline{0}&\underline{0}&\underline{0}&\underline{0}\\
		$^*$DA+ALS	&\underline{0}&\underline{0}&\underline{0}&\underline{0}&\underline{0}\\
		Penalized	&\underline{0}&\underline{0}&\underline{0}&\underline{0}&0.0159 	  \\
		Thresholded	&\underline{0}&\underline{0}&0.0240		  &0.0317		&0.0632 	  \\
		$^*$Baseline&0.8831		  &0.9088		&0.8634		  &0.8127		&0.7520 	  \\
		\bottomrule
	\end{tabular}
\end{table}

\begin{table}[ht!]
	\centering
	\caption{
		Mean relative error for synthetic $\Abf$ for which $a_{ij} \sim \Bernoulli(0.2)$. 
		The $^*$ symbol indicates methods we propose. 
		Best results are \underline{underlined}.
		\label{tab:synth-exp-bernoulli-0.2}
	}
	\begin{tabular}{lccccc}  
		\toprule
		& \multicolumn{5}{c}{Target ranks $r$ ($m = 30$)}\\
		\cmidrule(lr){2-6}
		Method & 1 & 2 & 3 & 4 & 5\\
		\midrule
		$^*$DA 		&\underline{0.9209}&\underline{0.8565}&\underline{0.8018}&0.7643			&\underline{0.7161}\\
		$^*$DA-ALS 	&\underline{0.9209}&\underline{0.8565}&\underline{0.8018}&\underline{0.7637}&\underline{0.7161}\\
		Penalized	&0.9989     	   &0.9230    		  &0.8559     		 &0.7875     		&0.7397     	   \\
		Thresholded	&0.9555     	   &0.8904     		  &0.8409     		 &0.7954     		&0.7609     	   \\
		$^*$Baseline&0.9365			   &0.8787			  &0.8238			 &0.7734			&0.7253			   \\
		\bottomrule
	\end{tabular}
\begin{tabular}{lccccc}  
	\toprule
	& \multicolumn{5}{c}{Target ranks $r$ ($m = 2000$)}\\
	\cmidrule(lr){2-6}
	Method & 1 & 2 & 3 & 4 & 5 \\
	\midrule		
 	$^*$DA		&0.9902			   &0.9743			  &0.9659			 &0.9452			&0.9484 		   \\
	$^*$DA+ALS	&0.9895			   &0.9727			  &0.9624			 &0.9403			&0.9436			   \\
	Penalized	&1.0000			   &1.0000			  &0.9998			 &0.9918		    &0.9751		 	   \\
	Thresholded	&0.9990			   &0.9932			  &0.9777		     &0.9601		    &0.9368			   \\
	$^*$Baseline&\underline{0.9639}&\underline{0.9281}&\underline{0.8928}&\underline{0.8577}&\underline{0.8228}\\
	\bottomrule
\end{tabular}
	\begin{tabular}{lccccc}  
		\toprule
		& \multicolumn{5}{c}{Target ranks $r$ ($m = 50000$)}\\
		\cmidrule(lr){2-6}
		Method & 1 & 2 & 3 & 4 & 5 \\
		\midrule		
		$^*$DA		&0.9914&0.9785&0.9624&0.9491&0.9421\\
		$^*$DA+ALS	&0.9914&0.9785&0.9623&0.9489&0.9413\\
		Penalized	&1				   &1				  &1			   	 &0.9986		    &0.9839			   \\
		Thresholded	&0.9998		       &0.9951			  &0.9833			 &0.9661		    &0.9443			   \\
		$^*$Baseline&\underline{0.9660}&\underline{0.9323}&\underline{0.8985}&\underline{0.8649}&\underline{0.8313}\\
		\bottomrule
	\end{tabular}
\end{table}

\begin{table}[ht!]
	\centering
	\caption{
		Mean relative error for synthetic $\Abf$ for which $a_{ij} \sim \Bernoulli(0.5)$. 
		The $^*$ symbol indicates methods we propose. 
		Best results are \underline{underlined}.
		\label{tab:synth-exp-bernoulli-0.5}
	}
	\begin{tabular}{lccccc}  
		\toprule
		& \multicolumn{5}{c}{Target ranks $r$ ($m = 30$)}\\
		\cmidrule(lr){2-6}
		Method & 1 & 2 & 3 & 4 & 5\\
		\midrule		
		$^*$DA 		&\underline{0.7844}&\underline{0.6773}&0.6056			 &0.5536			&0.5165 		   \\
		$^*$DA+ALS 	&\underline{0.7844}&\underline{0.6773}&\underline{0.6054}&\underline{0.5534}&\underline{0.5146}\\
		Penalized	&0.8606			   &0.7306			  &0.6611			 &0.6147			&0.5749 		   \\
		Thresholded	&0.7983			   &0.7170			  &0.6775			 &0.6875			&0.6680 		   \\
		$^*$Baseline&0.9519			   &0.9103			  &0.8679			 &0.8265			&0.7870			   \\
		\bottomrule
	\end{tabular}
	\begin{tabular}{lccccc}  
		\toprule
		& \multicolumn{5}{c}{Target ranks $r$ ($m = 2000$)}\\
		\cmidrule(lr){2-6}
		Method & 1 & 2 & 3 & 4 & 5 \\
		\midrule		
		$^*$DA		&0.8809			   &0.8234			  &0.7948			 &0.7540			&0.7347			   \\
		$^*$DA+ALS	&0.8628			   &\underline{0.8100}&\underline{0.7888}&\underline{0.7507}&\underline{0.7301}\\
		Penalized	&0.9802			   &0.9635			  &0.9371			 &0.8862			&0.8478			   \\
		Thresholded	&\underline{0.8568}&0.8346			  &0.8299			 &0.8311			&0.8038			   \\
		$^*$Baseline&0.9651			   &0.9307			  &0.8964			 &0.8622			&0.8282			   \\
		\bottomrule
	\end{tabular}
	\begin{tabular}{lccccc}  
		\toprule
		& \multicolumn{5}{c}{Target ranks $r$ ($m = 50000$)}\\
		\cmidrule(lr){2-6}
		Method & 1 & 2 & 3 & 4 & 5 \\
		\midrule		
		$^*$DA		&0.8820            &0.8214		   	  &0.7846			 &0.7550  			&0.7325			   \\
		$^*$DA+ALS	&0.8634			   &\underline{0.8099}&\underline{0.7807}&\underline{0.7521}&\underline{0.7324}\\
		Penalized	&0.9912			   &0.9819			  &0.9565			 &0.9033			&0.8770			   \\
		Thresholded	&\underline{0.8555}&0.8358			  &0.8324			 &0.8418			&0.8323			   \\
		$^*$Baseline&0.9664			   &0.9328			  &0.8992			 &0.8657			&0.8322			   \\
		\bottomrule
	\end{tabular}
\end{table}

\begin{table}[ht!]
	\centering
	\caption{
		Mean relative error for synthetic $\Abf$ for which $a_{ij} \sim \Bernoulli(0.8)$. 
		The $^*$ symbol indicates methods we propose. 
		Best results are \underline{underlined}.
		\label{tab:synth-exp-bernoulli-0.8}
	}
	\begin{tabular}{lccccc}  
		\toprule
		& \multicolumn{5}{c}{Target ranks $r$ ($m = 30$)}\\
		\cmidrule(lr){2-6}
		Method & 1 & 2 & 3 & 4 & 5\\
		\midrule		
		$^*$DA		&\underline{0.2446}&0.2288			  &0.2192			 &0.2097			&0.2050 		   \\
		$^*$DA+ALS	&\underline{0.2446}&\underline{0.2257}&\underline{0.2129}&\underline{0.2012}&\underline{0.1916}\\
		Penalized	&\underline{0.2446}&0.2441			  &0.2539			 &0.2843			&0.3346			   \\
		Thresholded	&\underline{0.2446}&0.2904			  &0.4390			 &0.5078			&0.5332			   \\
		$^*$Baseline&\underline{0.2446}&0.2446			  &0.2446			 &0.2446			&0.2446			   \\
		\bottomrule
	\end{tabular}
	\begin{tabular}{lccccc}  
		\toprule
		& \multicolumn{5}{c}{Target ranks $r$ ($m = 2000$)}\\
		\cmidrule(lr){2-6}
		Method & 1 & 2 & 3 & 4 & 5 \\
		\midrule		
		$^*$DA		&\underline{0.2503}&0.2550		 	  &0.2539			 &0.2590			&0.2480			   \\
		$^*$DA+ALS	&\underline{0.2503}&\underline{0.2473}&\underline{0.2459}&\underline{0.2402}&\underline{0.2384}\\
		Penalized	&\underline{0.2503}&0.2500			  &0.3202			 &0.4757			&0.5743			   \\
		Thresholded	&\underline{0.2503}&0.2625			  &0.6319			 &0.7702			&0.7581			   \\
		$^*$Baseline&\underline{0.2503}&0.2503			  &0.2503			 &0.2503			&0.2503			   \\
		\bottomrule
	\end{tabular}
	\begin{tabular}{lccccc}  
		\toprule
		& \multicolumn{5}{c}{Target ranks $r$ ($m = 50000$)}\\
		\cmidrule(lr){2-6}
		Method & 1 & 2 & 3 & 4 & 5 \\
		\midrule		
		$^*$DA		&\underline{0.2502}&0.2546			  &0.2542			 &0.2632			&0.2438			   \\
		$^*$DA+ALS	&\underline{0.2502}&\underline{0.2471}&\underline{0.2447}&\underline{0.2428}&\underline{0.2363}\\
		Penalized	&\underline{0.2502}&0.2574			  &0.3220			 &0.4996			&0.6021			   \\
		Thresholded	&\underline{0.2502}&0.2527			  &0.6923			 &0.8000			&0.8118			   \\
		$^*$Baseline&\underline{0.2502}&0.2502			  &0.2502			 &0.2502			&0.2502			   \\
		\bottomrule
	\end{tabular}
\end{table}

We found that Formulation~1 yields a lower decomposition error for a given number of iterations than Formulation~2 on the Fujitsu DA.
We therefore use the former in our experiments and refer to it as ``DA BMF'' or just ``DA'' in the tables. 
Additionally, we try adding a few extra alternating BLS steps (as discussed in the section \nameref{sec:sampling}) to the solutions we get from the DA.
We do at most 20 alternating BLS solves, and whenever no improvement occurs after two consecutive BLS solves, we terminate.
Since $r \leq 5$ in our experiments, we solve the BLS problems exactly by checking all possible solutions.
We refer to this method as ``DA+ALS BMF'' or just ``DA+ALS'' in the tables.
For some of the real datasets, we incorporate the constraint in the section \nameref{sec:constraints}.
For cases when $\Abf$ is large and rectangular, we use the sampling technique in the section \nameref{sec:sampling}.
We will point out when the sampling and/or additional constraints are used.

We run our proposed method on the Fujitsu DA for a fixed number of 1e+9 iterations.
Here, an iteration refers to one iteration of the for loop on line~5 in Algorithm~2 of \cite{aramon2019physics}.
We do not try to find an optimal number of iterations.
We take this approach to avoid cherry picking a number of iterations that works great for each individual problem.
By choosing a relative large number of iterations, we are also hoping to push the hardware to see how good solutions it can find.

As discussed by Glover et al.\ \cite{glover2018}, although the penalty $\lambda$ needs to be sufficiently large to ensure that the constrained and penalized versions of our optimization problems have the same minimizers, setting $\lambda$ to a smaller value may improve the solution produced by a QUBO solver in practice.
An intuitive explanation for this phenomenon is that a large $\lambda$ value gives a steeper optimization landscape which can make it difficult for a solver to escape local minima.
We find this to be true when running our methods on the Fujitsu DA as well.
We use $\lambda = 1$ in all our experiments since we found this to improve the solution quality, while at the same time avoiding constraint violations.

As mentioned in the section \nameref{sec:related-work}, the two methods by Zhang et al.\ \cite{zhang2007, zhang2010} are the most popular for the variant of BMF we consider.
We therefore use these methods for comparison in our experiments.
We refer to them as ``Penalized'' and ``Thresholded,'' respectively.
For the penalized version, we use the Bmf method in the Nimfa Python library \cite{Zitnik2012} available at \url{http://nimfa.biolab.si}.
We leave all parameters to their default values, except the maximum number of iterations (\verb|max_iter|) and the frequency of the convergence test (\verb|test_conv|) which we both set to 1000 since we find that this improves performance substantially over the defaults in our experiments.
We wrote our own implementation of the thresholded method since we could not find an existing implementation; see the section \nameref{sec:thresholding-bmf} in \nameref{S1-Text} for details.

We also include a simple baseline method.
The idea behind it is simple: 
If we seek a rank-$r$ BMF of $\Abf$, we can find one by simply choosing the densest $r$ rows/columns in $\Abf$. 
Alternatively, when $\Abf$ has high density, we can approximate it by a rank-1 BMF with all entries equal to 1.
This is clearly a very crude method, but it serves as a useful baseline and sanity check for the more sophisticated methods.
See the section \nameref{sec:baseline} in \nameref{S1-Text} for further details.

All experiment results are evaluated in terms of the following relative error measure: 
$\|\Abf - \Ubf\Vbf^\top\|_\F^2 / \| \Abf \|_\F^2$.
The norm is squared since this is more natural for binary data: 
When $\Ubf$ and $\Vbf$ are binary, $\|\Abf-\Ubf\Vbf^\top\|_\F^2$ is the number of entries that are incorrect in the decomposition and $\|\Abf\|_\F^2$ is the number of nonzero entries in $\Abf$.

\subsection*{Synthetic data}

For the first set of synthetic experiments, $\Abf$ is generated in such a way that it has an exact rank-$r$ decomposition, for $r \in [5]$. 
Our algorithm for generating these matrices is described in the section \nameref{sec:generate-matrices} in \nameref{S1-Text}.
We use the true rank as the target rank for the decompositions.
Ideally, the different methods should therefore be able to find an exact decomposition.
We generate $\Abf$ to have $n = 30$ columns and $m \in \{30, 2000, 50000\}$ rows.
When $m \geq 2000$, we use the sampling technique described in the section \nameref{sec:sampling} for all our methods.
We use a sample size of 30, so that $\Abf^{(s)} \in \{0,1\}^{30 \times 30}$.
All experiments are repeated 10 times.
Table~\ref{tab:synth-exp-exact-rank} reports the mean relative error for these experiments. 

For the second set of synthetic experiments, we draw each entry $a_{ij}$ independently from a Bernoulli distribution with probability of success $p \in \{0.2, 0.5, 0.8\}$.
We generate $\Abf$ with $n = 30$ columns and $m \in \{30, 2000, 50000\}$ rows, and use target ranks $r \in [5]$.
When $m \geq 2000$, we use the sampling technique described in the section \nameref{sec:sampling} for our methods with a sample size of $s=30$.
All experiments are repeated 10 times.
Tables~\ref{tab:synth-exp-bernoulli-0.2}, \ref{tab:synth-exp-bernoulli-0.5} and \ref{tab:synth-exp-bernoulli-0.8} report mean relative errors for each of the three different values of $p$.

In all synthetic experiments, the QUBO problem has $(30+30+30^2)r = 960r$ binary variables.
This is also true for the large rectangular matrices due to the choice of sample size $s=30$.

\subsection*{Real data}

In the first experiment on real data we consider the MNIST handwritten digits dataset \cite{lecun1998} (available at \url{http://yann.lecun.com/exdb/mnist/}).
We consider ten instances of each digit 0--9. 
The digits are $28 \times 28$ grayscale images with pixel values in the range $[0, 255]$, where 0 represents white and 255 represents black. 
We make these binary by setting values less than 50 to 0, and values greater than or equal to 50 to 1.
We apply BMF to the digits with target ranks $r \in [5]$.
The QUBO problem in DA BMF and DA+ALS BMF has $(28+28+28^2)r = 840r$ binary variables.
Table~\ref{tab:mnist} presents the mean relative error for each method across all instances of all digits.
Fig~\ref{fig:mnist} shows an example of the binary digit 3 and the low-rank approximations given by our DA+ALS BMF method.

\begin{table}[ht!]
	\centering
	\caption{
		Mean relative error for MNIST experiments.
		The $^*$ symbol indicates methods we propose. 
		Best results are \underline{underlined}.
		\label{tab:mnist}
	}
	\begin{tabular}{lccccc}  
		\toprule
		& \multicolumn{5}{c}{Target ranks $r$} \\
		\cmidrule(l){2-6}
		Method & 1 & 2 & 3 & 4 & 5 \\
		\midrule
		$^*$DA 		&\underline{0.5796}&\underline{0.3832}&0.2673			 &0.1983			&\underline{0.1522}\\
		$^*$DA+ALS 	&\underline{0.5796}&\underline{0.3832}&\underline{0.2672}&\underline{0.1982}&\underline{0.1522}\\
		Penalized	&0.6070			   &0.4072			  &0.2951			 &0.2238			&0.1836			   \\
		Thresholded	&0.5872			   &0.4141			  &0.3171			 &0.2797			&0.2738			   \\
		$^*$Baseline&0.8684			   &0.7484			  &0.6400			 &0.5476			&0.4655			   \\
		\bottomrule
	\end{tabular}
\end{table}

\begin{figure}[ht!]
	\centering
	\includegraphics[width=0.6\textwidth]{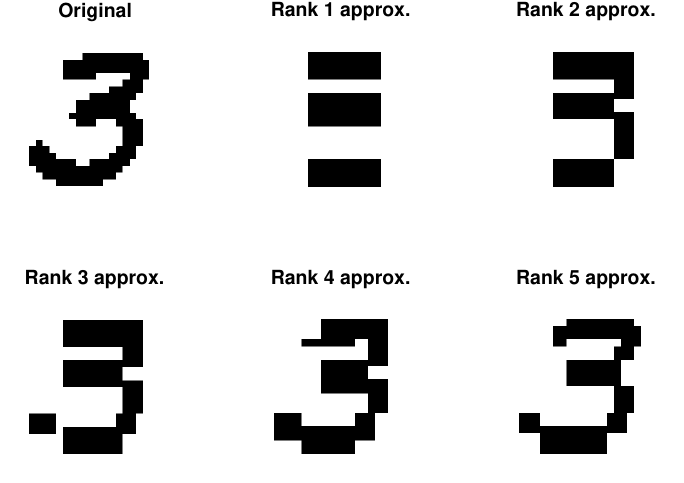}
	\caption{Binary low rank approximation to MNIST digit using DA+ALS BMF. \label{fig:mnist}}
\end{figure}

In the second experiment on real data, we consider two gene expression datasets for two types of cancer: leukemia and malignant melanoma.
The first dataset (available at \url{https://www.pnas.org/content/101/12/4164}) contains 38 gene samples for two kinds of leukemia, one of which can be further split into two subtypes \cite{brunet2004}.
The second dataset (available at \url{https://schlieplab.org/Static/Supplements/CompCancer/CDNA/bittner-2000/}) contains 38 gene samples, 31 of which are melanomas and 7 of which are controls \cite{bittner2000}.
We make these datasets binary by using the same thresholding approach as \cite{zhang2010} which we describe here briefly.
Let $0 \leq c_1 < c_2$ be two real numbers. For a matrix $\Abf \in \Rb^{m \times n}$ with nonnegative entries, let $\kappa \defeq \sum_{ij} a_{ij} / (mn)$. $\widetilde{\Abf}$, a discretized version of $\Abf$, is computed by setting its entries
\begin{equation}
	\widetilde{a}_{ij} = 
	\begin{cases}
		1 & \text{if } a_{ij} \leq \kappa c_1 \text{ or } a_{ij} \geq \kappa c_2,\\
		0 & \text{otherwise}.
	\end{cases}
\end{equation}
Optionally, the columns of $\Abf$ can be normalized so that they have unit Euclidean norm prior to discretization.
As an additional step, we remove any rows from $\widetilde{\Abf}$ that contain only zeros.
Following \cite{zhang2007, zhang2010}, we use $c_1 = 1/7$ and $c_2 = 5$ without column normalization on the leukemia dataset.
The melanoma dataset contains negative entries. 
We therefore first add a constant $-\min_{ij} a_{ij}$ to all entries in $\Abf$.
Then, we normalize the columns of the resulting matrix and discretize using $c_1 = 0.96$ and $c_2 = 1.04$.
Fig~\ref{fig:leukemia-thresholded} shows what the thresholded leukemia data looks like.

\begin{figure}[ht!]
	\centering  
	\includegraphics[width=.6\textwidth]{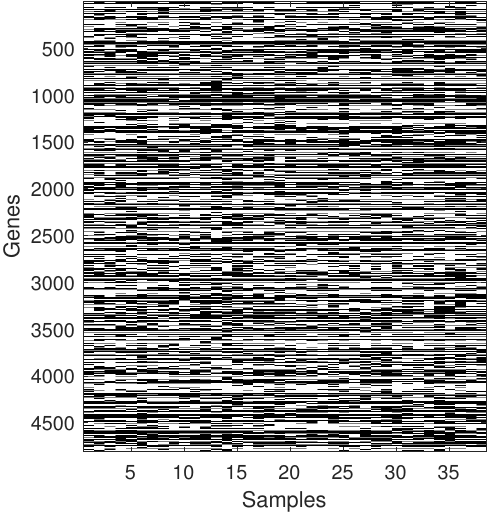}
	\caption{The thresholded leukemia data. 
	Black entries are 1 and white entries are 0.}
	\label{fig:leukemia-thresholded}
\end{figure}

After thresholding and removing any rows that are all zero, the two datasets are matrices of size $4806 \times 38$ and $2201 \times 38$, respectively. 
We use the sampling technique in the section \nameref{sec:sampling} for both datasets with a sample size of $s=30$.
Furthermore, we include the constraints on the $\Vbf$ matrix in the QUBO formulations as discussed in the section \nameref{sec:constraints}.
Although we expect that this will increase the decomposition error somewhat, including such constraints can be helpful e.g.\ when clustering the samples.
The QUBO problem in our methods has $(30+38+30 \cdot 38)r = 1208r$ binary variables.
Table~\ref{tab:gene-1} reports the mean relative error across 10 trials for each dataset.

\begin{table*}[ht!]
	\centering
	\caption{
		Mean relative error for gene expression data. 
		The $^*$ symbol indicates methods we propose. 
		Best results are \underline{underlined}.
		\label{tab:gene-1}
	}
	\begin{tabular}{lccccc}  
		\toprule
		& \multicolumn{5}{c}{Target ranks $r$ (leukemia dataset \cite{brunet2004})}\\
		\cmidrule(lr){2-6}
		Method & 1 & 2 & 3 & 4 & 5\\
		\midrule		
		$^*$DA 		&0.4292			   &0.4069			  &0.3853			 &0.4257			&0.4025 		   \\
		$^*$DA+ALS 	&\underline{0.3939}&\underline{0.3806}&\underline{0.3716}&\underline{0.3683}&\underline{0.3600}\\
		Penalized	&0.3977			   &0.3952			  &0.4767			 &0.5399			&0.5810			   \\
		Thresholded	&\underline{0.3939}&0.4219			  &0.6195			 &0.6625			&0.6894			   \\
		$^*$Baseline&0.9698			   &0.9408			  &0.9123			 &0.8842			&0.8563			   \\
		\bottomrule
	\end{tabular}
	\begin{tabular}{lccccc}  
		\toprule
		& \multicolumn{5}{c}{Target ranks $r$ (melanoma dataset \cite{bittner2000})}\\
		\cmidrule(lr){2-6}
		Method & 1 & 2 & 3 & 4 & 5 \\
		\midrule		
		$^*$DA 		&0.8898			   &0.8392			  &0.7869			 &0.7850		    &0.7722			   \\
		$^*$DA+ALS 	&\underline{0.8572}&\underline{0.7757}&\underline{0.7416}&\underline{0.7252}&\underline{0.7113}\\
		Penalized	&0.8662			   &0.8405			  &0.8110			 &0.7903			&0.7633			   \\
		Thresholded	&0.8662			   &0.8338			  &0.8226			 &0.8120			&0.8116			   \\
		$^*$Baseline&0.9504			   &0.9027			  &0.8569			 &0.8132			&0.7695			   \\
		\bottomrule
	\end{tabular}
\end{table*}

\subsection*{Discussion} \label{sec:discussion}

The accuracy improvement of DA+ALS BMF over DA BMF is typically very small, but more substantial in a few cases.
Our DA+ALS BMF has the same or better accuracy as either of the methods by \cite{zhang2007, zhang2010} in 72 of the 75 experiments reported in Tables~\ref{tab:synth-exp-exact-rank}--\ref{tab:gene-1} in this paper, and a strictly better accuracy in 57 of the experiments.

For a fixed rank, the number of binary variables for the QUBO problem in DA BMF is similar across all experiments. 
The anneal time, which is the time spent by the DA looking for a solution, is about 40 seconds for the largest problems.
The additional ALS steps used for DA+ALS take on average much less than a second when $m = 30$, and less than about 3 seconds when $m = 2000$. 
For $m=50000$, the additional time for ALS can be more substantial, adding on average as much as 66 seconds for rank 5 experiments.

Two advantages of the methods by \cite{zhang2007, zhang2010} are that they typically are quite fast and that they can run on a standard computer.
For all experiments except the synthetic ones with $m=50000$, their penalized and thresholded algorithms run in less than 2 and 20 seconds on average, respectively.
The very large experiments with $m=50000$ can take longer, up to 39 and 289 seconds on average for the penalized and thresholded algorithms, respectively, when $r = 5$.

Based on these observations, DA+ALS BMF seems like the superior method when accuracy is crucial.
The methods by \cite{zhang2007, zhang2010} may be more suitable when speed and accessibility are more important.
With that said, we believe that the DA could be run with many fewer iterations with little or no degradation in performance in most of our experiments.
The trade-off between accuracy and speed for the DA, and how to choose the number of iterations to strike a good balance, are interesting directions for future research.

Certain matrices, like those of size $2000 \times 30$ and expected density 0.2 considered in Table~\ref{tab:synth-exp-bernoulli-0.2}, seem inherently difficult to handle for any of the sophisticated methods.
Indeed, it is surprising that the simple baseline method substantially outperforms all other methods.

\section*{Conclusion} \label{sec:conclusion}

BMF has many applications in data mining.
We have presented two ways to formulate BMF as QUBO problems.
These formulations can be used to do BMF on special purpose hardware, such as the D-Wave quantum annealer and the Fujitsu DA.
We also discussed how clustering constraints can easily be incorporated into our QUBO formulations.
Moreover, we showed how sampling and alternating binary least squares can be used to handle large rectangular matrices.
Our experiments, which we run on the Fujitsu DA, are encouraging and show that our proposed methods typically give more accurate solutions than competing methods.

The special purpose hardware technologies discussed in this paper are still in an early phase of development. 
As these technologies mature, we believe that they will emerge as powerful tools for solving problems in data mining and other areas.

\section*{Supplementary material} \label{S1-Text}

\subsection*{Implementation of thresholding method for BMF} \label{sec:thresholding-bmf}

Zhang et al.\ \cite{zhang2007, zhang2010} present two algorithms for BMF. 
The penalty based version (Algorithm~1 in \cite{zhang2010}) is available in existing Python implementations; see e.g.\ \cite{nimfa_toolbox} and \cite{pymf}. We have not been able to find an implementation of their thresholding algorithm (Algorithm~2 in \cite{zhang2010}).
For their thresholding algorithm, they give two alternative solution approaches: Discretization and gradient descent. 
They only use the gradient descent based variant in their experiments, since they say discretization is too computationally expensive.
However, as we show below, if the discretized variant is implemented carefully, it can find the optimal solution to the thresholding approach in $O(mnr^2 \min(m,n))$ time, which is fast enough for our experiments.

For a matrix $\Wbf \in \Rb^{m \times r}$, we define $g$ to be the function that outputs a vector $\vbf = g(\Wbf) \in \Rb^{mr}$ such that $\vbf$ contains all the elements in $\Wbf$, ordered in descending order, i.e., such that $v_{i} \geq v_{j}$ whenever $i < j$. 
When two entries in $\Wbf$ have the same value, which of them comes first in $\vbf$ does not matter, as long as this is decided in some consistent fashion.
We also assume that the functions $f_r$ and $f_c$ take an entry $v_k$ as input and return the row and column position, respectively, of this entry in $\Wbf$. 
For example, if the number $v_{k}$ corresponds to an entry in position $(i,j)$ in $\Wbf$, then $f_r(v_{k}) = i$ and $f_c(v_{k}) = j$.
The thresholding function $\theta : \Rb \rightarrow \{0,1\}$ is defined as
\begin{equation}
	\theta(x) \defeq 
	\begin{cases}
		1 & \text{if } x \geq 0, \\
		0 & \text{if } x < 0.
	\end{cases} 
\end{equation}
The number $\eta$ is any positive constant.
Algorithm~\ref{alg:bmf-thr} provides our implementation of the thresholding method by \cite{zhang2010}.
We use a variant of it coded in Python in our experiments.
\begin{algorithm}[ht!]
	\caption{Efficient implementation of Algorithm~2 in \cite{zhang2007}\label{alg:bmf-thr}}
	\DontPrintSemicolon 
	\KwData{Matrices $\Abf$, $\Wbf$, $\Hbf$ such that $\Abf \approx \Wbf \Hbf$ is a nonnegative matrix factorization}
	\KwResult{Binary matrices $\widetilde{\Wbf}$, $\widetilde{\Hbf}$ such that $\Abf \approx \widetilde{\Wbf} \widetilde{\Hbf}$ is a binary matrix factorization}
	
	\tcc{Initialization}
	$\widetilde{\varepsilon} = \|\Abf\|_\F^2$ \\
	$\widetilde{\Wbf} = \zerobf_{m \times r}$ \\
	$\widetilde{\Hbf} = \zerobf_{r \times n}$ \\
	$\vbf = g(\Wbf)$ \\
	$\kbf = g(\Hbf)$ \\
	
	\BlankLine
	
	\tcc{Search through grid}
	\For{$p = v_{2}, \ldots, v_{mr}, v_{mr}+\eta$}{
		$\Wbf' = \theta(\Wbf - p)$ \label{line:Wb} \\
		\For{$q = k_{2}, \ldots, k_{nr}, k_{nr}+\eta$}{
			$\Hbf' = \theta(\Hbf - q)$ \label{line:Hb} \\
			$\Xbf = \Wbf' \Hbf'$ \label{line:X} \\
			$\varepsilon = \| \Abf - \Xbf \|_\F^2$ \label{line:error} \\
			Set flag to true if $\Xbf \in \{0,1\}^{m \times n}$, false otherwise \label{line:flag} \\
			\uIf{flag is false}{
				break \tcp*[h]{Since $\Xbf$ won't be binary for subsequent $q$s}
			}
			\ElseIf{$\varepsilon < \widetilde{\varepsilon}$}{
				$\widetilde{\Wbf} = \Wbf'$ \label{line:update-best-W} \\
				$\widetilde{\Hbf} = \Hbf'$ \label{line:update-best-H} \\
				$\widetilde{\varepsilon} = \varepsilon$ \\
			}
		}
	}

	\BlankLine
		
	\Return $\widetilde{\Wbf}$ and $\widetilde{\Hbf}$ 
\end{algorithm}

The reason this algorithm works is that there are only $mr+1$ ways to threshold $\Wbf$ and only $nr+1$ ways to threshold $\Hbf$. 
The nested for loops search through all the thresholdings that result in nonzero values of $\Xbf$. 
Moreover, since each entry of $\Xbf$ is nondecreasing during the iterations of the inner for loop, if $\Xbf$ is nonbinary for some $p$ and some $q = k_i$, it will be nonbinary for that same $p$ and all subsequent $q = k_j$ where $j > i$.
And this is why it is fine to break out of the inner for loop, since none of the following iterates are going to lead to valid factorizations.
Algorithm~\ref{alg:bmf-thr} is presented at a higher level to make it easier to follow.
We now discuss some implementation details that make the algorithm faster and allow us to achieve $O(mnr^2 \min(m,n))$ run time.
\begin{itemize}
	\item Line~\ref{line:Wb}: Setting $\Wbf' = \zerobf_{m \times r}$ before the outer for loop, it is sufficient to make the update $w'_{f_r(p) f_c(p)} = 1$ on this line.
	\item Line~\ref{line:Hb}: Setting $\Hbf' = \zerobf_{r \times n}$ before the inner for loop, it is sufficient to make the update $h'_{f_r(q) f_c(q)} = 1$ on this line.
	\item Line~\ref{line:X}: Setting $\Xbf = \zerobf_{m \times n}$ before the inner for loop, it is sufficient to make the update $\xbf_{* f_c(q)} = \xbf_{* f_c(q)} + \wbf'_{* f_r(q)}$ on this line.
	\item Line~\ref{line:error}: Assuming that the column $\xbf_{* f_c(q)}$ was saved prior to making the update on line~\ref{line:X} in a vector $\tbf$, we can now compute 
	\begin{equation}
		\varepsilon = \varepsilon + \| \abf_{* f_c(q)} - \xbf_{* f_c(q)} \|_\F^2 - \| \abf_{* f_c(q)} - \tbf \|_\F^2.
	\end{equation}
	\item Line~\ref{line:flag}: The value of the flag can be computed by evaluating the truth of $[\xbf_{* f_c(q)} \in \{0,1\}^m]$. 
	\item Lines~\ref{line:update-best-W} and \ref{line:update-best-H}: Instead of updating all entries in the two matrices, it is sufficient to just keep track of the pair $(i,j)$ corresponding to the $p = v_i$ and $q = k_j$ that resulted in the new best decomposition.
	The matrices $\widetilde{\Wbf}$ and $\widetilde{\Hbf}$ to be returned can then be computed right before the return statement.
\end{itemize}

Assume $m \leq n$. 
The complexity of each inner loop iteration in the algorithm is $O(m)$. 
This is then repeated $O(mnr^2)$ times over all inner and outer loop iterations. 
The total cost is therefore $O(m n r^2 \cdot m)$.
If $n \leq m$, we can transpose the input matrix before applying the algorithm, resulting in a complexity of $O(m n r^2 \cdot n)$.
The complexity is therefore $O(m n r^2 \min(m,n))$, as claimed.

\subsection*{Details on baseline method} \label{sec:baseline}

Algorithm~\ref{alg:bmf-baseline} describes our baseline method in detail.
On lines~\ref{line:baseline-init-U} and \ref{line:baseline-init-V}, $\Ubf$ and $\Vbf$ are initialized to zero matrices.
On line~\ref{line:baseline-compute-R}, the current residual $\Ebf$ is computed.
On line~\ref{line:baseline-iprime}, the least index $i'$ is computed such that the sum of the $i'$th row of $\Ebf$ is equal to the max row sum of $\Ebf$.
A similar index is computed for the columns of the residual on line~\ref{line:baseline-jprime}.
The update on lines~\ref{line:baseline-update-U-1} and \ref{line:baseline-update-V-1} eliminate the densest column in the residual. 
Similarly, the update on lines~\ref{line:baseline-update-U-2} and \ref{line:baseline-update-V-2} eliminate the densest row in the residual.
The first if statement determines if a row or column should be eliminated in order to maximizing the reduction in the residual.
Finally, the second if statement checks if a simple rank-1 approximation consisting of only ones yields a better approximation than the row/column elimination procedure.

\begin{algorithm}[ht!]
	\caption{Baseline method \label{alg:bmf-baseline}}
	\DontPrintSemicolon 
	\KwData{Matrix $\Abf$, target rank $r$}
	\KwResult{Binary matrices $\Ubf, \Vbf$ such that $\Abf \approx \Ubf \Vbf^\top$ is a binary matrix factorization}
	
	Set $\Ubf = \zerobf_{m \times r}$ \label{line:baseline-init-U}\\
	Set $\Vbf = \zerobf_{n \times r}$ \label{line:baseline-init-V}
	
	\BlankLine

	\For{$k \in [r]$}{
		Set $\Ebf = \Abf - \Ubf \Vbf^\top$ \tcp*[h]{Compute residual} \label{line:baseline-compute-R}\\
		Let $i' = \min\{ i \in [m] \; : \; \sum_j e_{ij} = \max_{\ell} \sum_{j} e_{\ell j} \}$ \label{line:baseline-iprime}\\
		Let $j' = \min\{ j \in [n] \; : \; \sum_i e_{ij} = \max_{\ell} \sum_i e_{i \ell} \}$ \label{line:baseline-jprime}\\
		\BlankLine
		\tcc{Eliminate row/column in residual with most nonzeros}
		\uIf{$\sum_{j} e_{i' j} < \sum_{i} e_{i j'}$}{
			Set $\ubf_{* k} = \ebf_{* j'}$ \label{line:baseline-update-U-1}\\
			Set $v_{j' k} = 1$ \label{line:baseline-update-V-1}\\
		}
		\Else{
			Set $u_{i' k} = 1$  \label{line:baseline-update-U-2}\\
			Set $\vbf_{* k} = \ebf_{i' *}$  \label{line:baseline-update-V-2}
		}
	}
	
	\BlankLine	
	
	\tcc{Check if trivial rank-1 approximation of all 1's is better}
	\If{$\| \Abf - \onebf_{m \times n} \|_\F^2 < \|\Abf - \Ubf\Vbf^\top\|_F^2$}{
		Set $\Ubf = [\onebf_{m \times 1} \;, \; \zerobf_{m \times (r-1)}]$  \label{line:baseline-update-U-3}\\
		Set $\Vbf = [\onebf_{n \times 1} \; , \; \zerobf_{n \times (r-1)}]$  \label{line:baseline-update-V-3}\\
	}	
	
	\BlankLine
	
	\Return $\Ubf, \Vbf$ 
\end{algorithm}

\subsection*{Algorithm for generating binary matrices} \label{sec:generate-matrices}

In this section, we present the algorithm we use for generating $\Abf$ with an exact rank-$r$ decomposition in the synthetic experiments.
It is given in Algorithm~\ref{alg:bmf-gen}.
The factor matrices $\Ubf$ and $\Vbf$ are randomly initialized with entries drawn independently from Bernoulli distributions with success probabilities $p_U$ and $p_V$, respectively.
We use $p_U = p_V = 0.7$ in our experiments.
At this point, if $r > 1$, it may be the case that the product $\Abf = \Ubf \Vbf^\top$ is not binary.
The purpose of the while loop is to eliminate nonzero entries in $\Ubf$ and $\Vbf$ until $\Ubf \Vbf^\top$ is binary.
$\rho$ is a function which returns the average of the entries in a matrix, e.g., $\rho(\Ubf) = \sum_{ik} u_{ik} / (nr)$.
For each iteration of the while loop, a nonzero entry is eliminated in the factor matrix with the highest entry average.
Eventually, $\Abf = \Ubf \Vbf^\top$ will be binary, at which point the while loop terminates and $\Abf$ is returned. 

\begin{algorithm}[ht!]
	\caption{Algorithm for generating binary matrix which has an exact rank-$r$ BMF \label{alg:bmf-gen}}
	\DontPrintSemicolon 
	\KwData{Matrix dimensions $m, n$, rank $r$, target initial densities $p_U, p_V \in (0,1)$}
	\KwResult{Binary matrix $\Abf$ which has an exact rank-$r$ BMF}
	
	Draw all entries $u_{ik} \sim \Bernoulli(p_U)$ and $v_{jk} \sim \Bernoulli(p_V)$ independently \\
	Set $\Abf = \Ubf \Vbf^\top$ \\
	
	\BlankLine
	
	\While{$\Abf$ has an entry exceeding 1}{
		\uIf{$\rho(\Ubf) \geq \rho(\Vbf)$}{
			Let $\phi = \{ i \in [m] \; : \; \exists j \in [n] \text{ satisfying } a_{ij} > 1 \text{ and } \sum_{k} u_{ik} > 2 \}$ \label{line:u1}\\
			Draw index $i' \in \phi$ uniformly at random \\
			Choose nonzero entry in the row $\ubf_{i' *}$ uniformly at random and set it to 0 \label{line:u3}\\
		}
		\Else{
			Let $\phi = \{ j \in [n] \; : \; \exists i \in [m] \text{ satisfying }  a_{ij} > 1 \text{ and } \sum_{k} v_{jk} > 2 \}$ \label{line:v1}\\
			Draw index $j' \in \phi$ uniformly at random \\
			Choose nonzero entry in the row $\vbf_{j' *}$ uniformly at random and set it to 0 \label{line:v3} \\
		}
		Set $\Abf = \Ubf \Vbf^\top$ \\
	}	

	\BlankLine

	\Return $\Abf$ 
\end{algorithm}

\bibliography{new-zotero-library}

\end{document}